\documentclass{article}

\usepackage[preprint]{neurips_2026}

\usepackage[utf8]{inputenc} % allow utf-8 input
\usepackage[T1]{fontenc}    % use 8-bit T1 fonts
\usepackage{url}            % simple URL typesetting
\usepackage{booktabs}       % professional-quality tables
\usepackage{amsfonts}       % blackboard math symbols
\usepackage{nicefrac}       % compact symbols for 1/2, etc.
\usepackage{microtype}      % microtypography
\usepackage{xcolor}         % colors
\usepackage{thmtools}       % for restatable environment
\usepackage{microtype}

\usepackage{graphicx}
\usepackage{amsmath}
\usepackage{amssymb}
\usepackage{subfigure}
\usepackage{caption} 
\usepackage{natbib}
\usepackage{enumitem}

\usepackage{booktabs} % for professional tables
\usepackage{amsmath}
\usepackage{amssymb}
\usepackage{mathtools}
\usepackage{amsthm}

\theoremstyle{plain}

\theoremstyle{definition}

\theoremstyle{remark}

\usepackage{listings}          
\usepackage{algorithm}  
\usepackage{algorithmicx}  
\usepackage{algpseudocode}

\algnewcommand{\LeftComment}[1]{\Statex \(\triangleright\) #1}

\usepackage{array}
\usepackage{amsmath}
\usepackage{amssymb}
\usepackage{mathtools}
\usepackage{amsthm}
\usepackage{arydshln}
\usepackage{adjustbox}
\usepackage{enumitem}

\usepackage{listings}           
\usepackage{tcolorbox}
\tcbuselibrary{listings,skins}        
\newtcolorbox{promptspecial}[1][]{
  colback      = gray!10,        
  colframe     = gray!50!black,  
  fontupper    = \ttfamily,      
  nobeforeafter,
  title        = Prompt Template,
  #1
}

\newtcolorbox{prompttext}[1][]{
  colback=gray!10,        
  colframe=gray!50!black, 
  nobeforeafter,
  title=Prompt Template,
  #1
}

\newtcolorbox{outputcase}[1][]{
  colback=gray!10,        
  colframe=gray!50!black, 
  nobeforeafter,
  title=Output Case,
  #1
}

\usepackage[utf8]{inputenc} % allow utf-8 input
\usepackage[T1]{fontenc}    % use 8-bit T1 fonts
\usepackage{url}            % simple URL typesetting
\usepackage{booktabs}       % professional-quality tables
\usepackage{amsfonts}       % blackboard math symbols
\usepackage{nicefrac}       % compact symbols for 1/2, etc.
\usepackage{microtype}      % microtypography
\usepackage{xcolor}         % colors

\usepackage{natbib}
\usepackage[colorlinks=true, citecolor=blue]{hyperref}

\usepackage[capitalize,noabbrev]{cleveref}

\usepackage[capitalize,noabbrev]{cleveref}
\title{FABSVer: Faster Training and Better Self-Verification for LLM Mathematical Reasoning}

\author{Haihui Pan, Junwei Bao, Hongfei Jiang, Yang Song \\
Zuoyebang Education Technology\\
}

\begin{document}

\maketitle

\begin{abstract}
While large language models have made significant progress in mathematical reasoning, they remain unreliable at judging the correctness of their own solutions. Existing approaches that equip models with self-verification typically treat solution generation and verification as two separate tasks, leading to substantially increased training time.  In this paper, we propose \textbf{FABSVer}, which fuses these two tasks into a single generation pass, dramatically reducing training overhead while jointly optimizing both capabilities. We further identify a convergence bottleneck both theoretically and empirically: as training progresses, the reward reaches a plateau because the policy is constrained by a fixed reference model. To overcome this, we introduce Dynamic Reference Model Update (DRMU), which raises the reward ceiling and enables sustained reward growth. Extensive experiments on math benchmarks demonstrate that FABSVer achieves superior self-verification and reasoning performance across three model scales, while requiring only 51\%--71\% of the training time of existing methods.  Analysis further reveals distinct learning phases in how models acquire self-verification, and that the gap between verify and answer rewards shrinks noticeably as model size increases.

\end{abstract}

\section{Introduction}

Large Language Models (LLMs) have become a foundational paradigm in artificial intelligence, demonstrating remarkable capabilities across a wide range of natural language processing tasks. Through scaling model size, data, and compute, modern LLMs exhibit emergent abilities in code generation, commonsense reasoning, and mathematical problem-solving \citep{r1, o1, qw2.5}. Among these, mathematical reasoning serves as a critical benchmark for evaluating logical deduction and multi-step inference, as it demands not only factual knowledge but also rigorous step-by-step reasoning.

In recent years, LLMs have advanced substantially in mathematical reasoning, largely driven by Reinforcement Learning with Verifiable Rewards (RLVR) \citep{r1, simplerl-zoo, o1, hu2025open}. By leveraging objective, rule-based rewards derived solely from final answer correctness, RLVR enables scalable training and elicits sophisticated reasoning processes.

However, despite their improved problem-solving accuracy, LLMs trained via RLVR suffer from a critical limitation: overconfidence in their own outputs. As noted in recent studies, even when a generated solution is fundamentally incorrect, the model frequently perceives it as correct \citep{deepseekmath-v2, rise, sareen2025putting, zhang2025incentivizing}. This verification gap motivates the need for self-verification—the ability of a model to accurately judge the correctness of its own generated sequences.

The importance of self-verification extends well beyond improved accuracy. From a trustworthiness standpoint, LLM reasoning remains largely opaque; self-verification offers a window into the model's confidence, helping users decide when to rely on its outputs and enabling models to flag their own potential errors. Most fundamentally, from a comprehension perspective, solving a problem without the ability to judge the correctness of a solution reflects, at best, a superficial understanding. Just as true mastery entails evaluating both one's own answers and those of others, genuine reasoning requires the integration of generation and verification.

Recent work on intrinsic self-verification, such as \textsc{RISE} \citep{rise}, jointly optimizes solution generation and verification during online reinforcement learning. However, these approaches treat generation and verification as two independent forward passes. Given $N$ questions, with $k$ solutions generated per question and $m$ verification judgments per solution, the total number of forward passes inflates from $N \cdot k$ to $N \cdot k \cdot (1 + m)$, multiplicatively increasing both training data and inference time. This overhead makes large-scale self-verification training prohibitively expensive.

Beyond training efficiency, a more fundamental challenge stems from the optimization dynamics of RLVR itself. In standard frameworks such as PPO and GRPO \citep{ppo, deepseekmath-v2}, a fixed reference model regularizes policy updates via a KL divergence penalty. While this stabilizes training, it imposes an implicit upper bound on policy improvement: as we theoretically demonstrate, once the KL divergence ceases to grow, the achievable reward is effectively capped. This bottleneck is particularly detrimental for self-verification training, which requires sustained exploration over extended horizons.

To address these challenges, we propose \textbf{FABSVer} (\underline{F}aster Training \underline{a}nd \underline{B}etter \underline{S}elf-\underline{Ver}ification), built upon two key innovations: (1) \textbf{Task Fusion}, which merges solution generation and self-verification into a single forward pass, eliminating the multiplicative overhead; and (2) \textbf{Dynamic Reference Model Update (DRMU)}, which periodically resets the reference model to the current policy, releasing the KL constraint and enabling sustained reward growth. We provide theoretical justification for DRMU by deriving an upper bound relating the expected reward to the KL divergence.

We validate FABSVer on three model scales (1.5B, 3B, and 7B) across multiple math benchmarks. The results, detailed in Section~\ref{sec: experiment}, show that FABSVer consistently outperforms existing methods in both self-verification and reasoning, while requiring substantially less training time. Beyond performance gains, our analysis uncovers new insights into the training dynamics of self-verification.

Our contributions are summarized as follows:
\begin{itemize}
    \item We propose \textbf{FABSVer}, a task-fused training framework that unifies solution generation and self-verification into a single forward pass, eliminating the multiplicative overhead of existing approaches while enabling synergistic optimization of both capabilities.
    \item We introduce \textbf{DRMU}, a simple yet theoretically grounded mechanism that overcomes the convergence bottleneck caused by a fixed reference model, enabling sustained reward growth with theoretical guarantees.
    \item Through extensive experiments across three model scales, we demonstrate that FABSVer achieves superior performance at substantially lower training cost, and we uncover distinct learning phases and scaling effects in self-verification acquisition.
\end{itemize}

\section{Related work}

\paragraph{RLVR for Reasoning.}  Reinforcement Learning with Verifiable Rewards (RLVR) has recently emerged as a powerful paradigm for enhancing LLM reasoning capabilities \citep{r1,simplerl-zoo, o1, hu2025open}. 
Unlike previous RL approaches that rely on either human preference annotations \citep{rlhf, dpo} or complex process reward models that require fine-grained supervision \citep{math500-2, setlur2024rewarding}, RLVR leverages objective, rule-based rewards derived solely from final answer correctness, such as exact match with ground truth or passing unit tests.
This key advantage—providing feedback based only on the final output without requiring complete output annotations or dense signals from process supervision—enables scalable and efficient training for reasoning tasks. Recent works \citep{r1,simplerl-zoo, o1, hu2025open} have demonstrated that such outcome-based RL can effectively elicit sophisticated reasoning processes in LLMs, achieving substantial improvements on mathematical and programming benchmarks.

\paragraph{Self-Verification for LLM.} Self-verification—the ability of a model to evaluate the correctness of its own outputs—has been recognized as a crucial capability for reliable LLM reasoning \citep{deepseekmath-v2, rise, sareen2025putting, zhang2025incentivizing, lin2025learning}. While RLVR effectively improves a model's problem-solving accuracy through outcome-based rewards, recent studies have identified a notable limitation: models trained with RLVR often lack robust self-verification capabilities \citep{deepseekmath-v2, rise, sareen2025putting, zhang2025incentivizing}. Specifically, these models tend to exhibit overconfidence, incorrectly judging their own erroneous solutions as correct. This observation has motivated a growing body of research aimed at equipping models with both problem-solving and self-verification abilities simultaneously \citep{deepseekmath-v2, rise, zhang2025incentivizing}. By jointly optimizing for solution generation and correctness verification, these approaches seek to address the verification gap and further enhance the reliability of LLM reasoning systems.

\section{Preliminaries}

\subsection{Reinforcement Learning for LLMs}

Reinforcement Learning (RL) has emerged as a crucial technique for aligning LLMs with human preferences and specific task objectives. In the context of LLMs, the RL framework is adapted as follows: the policy $\pi_\theta$ is the language model itself, parameterized by $\theta$, which generates text autoregressively. At each generation step $t$, given a prompt $\textbf{x}$ and previously generated tokens $\textbf{y}_{<t}$, the model produces a next token $y_t$ according to $\pi_\theta(y_t|\textbf{x}, \textbf{y}_{<t})$. The complete response $\textbf{y} = (y_1, y_2, ..., y_{|\textbf{y}|})$ forms a trajectory. After generating the complete response, a reward model or task-specific metric assigns a scalar reward $r$ reflecting the overall quality of the response.

\paragraph{Policy gradient (PG).} PG methods \citep{pg} provide a way to optimize the language model directly by estimating the gradient of the expected reward. For LLMs, the standard policy gradient objective can be written as:
\begin{align}
\mathcal{J}_{PG}(\theta) = \mathbb{E}_{\textbf{x} \sim \mathcal{D}, \textbf{y} \sim \pi_\theta(\cdot|\textbf{x})} \left[ r(\textbf{x}, \textbf{y}) \right],
\end{align}
where $\mathcal{D}$ is the input distribution and $r(\textbf{x}, \textbf{y})$ is the reward for response $\textbf{y}$ given input $\textbf{x}$. 

\paragraph{Proximal Policy Optimization (PPO).}
PPO  \citep{ppo} was originally designed to improve training stability by restricting policy updates to a trust region. In the context of LLMs, the objective of PPO is typically formulated as maximizing a clipped surrogate objective while incorporating a KL-divergence penalty to keep the learned policy close to a reference model \( \pi_{\text{ref}} \):
\begin{align}
\mathcal{J}_{\text{PPO}}(\theta) = \mathbb{E}_{\textbf{x} \sim \mathcal{D}, \textbf{y} \sim \pi_{\text{old}}(\cdot|\textbf{x})} \left[ \frac{1}{|\textbf{y}|} \sum_{t=1}^{|\textbf{y}|} \min \left( r_t(\theta) \hat{A}_t, \text{clip}(r_t(\theta), 1-\epsilon, 1+\epsilon) \hat{A}_t \right) - \beta \text{KL}[\pi_{\theta} || \pi_{\text{ref}}] \right],
\end{align} 
where \( r_t(\theta) = \frac{\pi_{\theta}(y_t|\textbf{x}, \textbf{y}_{<t})}{\pi_{\text{old}}(y_t|\textbf{x}, \textbf{y}_{<t})} \) is the importance sampling ratio, \( \hat{A}_t \) is the advantage estimate, and \( \beta \) is a coefficient controlling the strength of the KL penalty. PPO relies on a value model to compute advantages, which introduces significant memory and computational overhead during training.

\paragraph{Group Relative Policy Optimization (GRPO).} To eliminate the need for an expensive value model, GRPO \citep{grpo} introduces a novel group-based advantage estimation method. For the same input \(\textbf{x}\), GRPO first samples a group of \(G\) outputs \(\{ \textbf{y}_1, \textbf{y}_2, \dots, \textbf{y}_G \}\) from the old policy \( \pi_{\text{old}} \). Instead of learning a value model, it estimates the advantage by comparing rewards within this group. For each output \(\textbf{y}_i\), we obtain its reward score \( r_i = r(\textbf{x}, \textbf{y}_i) \) from the reward model. The advantage \( \hat{A}_i \) for the \(i\)-th output is defined as the reward normalized by the mean and standard deviation of the group. The optimization objective of GRPO is then formulated as:
\begin{align}
\label{eq:GRPO}
\mathcal{J}_{\text{GRPO}}(\theta) = & \mathbb{E}_{\textbf{x} \sim \mathcal{D}, \{\textbf{y}_i\}_{i=1}^{G} \sim \pi_{\text{old}}(\cdot|\textbf{x})} \nonumber
\\
& \left[ \frac{1}{G} \sum_{i=1}^{G}  \left( \min \left( r_{i}(\theta) \hat{A}_i, \text{clip}(r_{i}(\theta), 1-\epsilon, 1+\epsilon) \hat{A}_i \right) - \beta \text{KL}[\pi_{\theta} || \pi_{\text{ref}}] \right) \right],
\end{align}
where \( r_i(\theta) = \frac{\pi_{\theta}(\textbf{y}|\textbf{x})}{\pi_{\text{old}}(\textbf{y}|\textbf{x})} \) is the importance sampling ratio and $\hat{A}_i = \frac{r_i - \text{mean}(r_1,\cdots, r_G)}{\text{std}(r_1,\cdots, r_G)}$.

\subsection{Self-Verification for LLMs}

Recent studies indicate that  LLMs trained via RVLR often suffer from overconfidence regarding their generated outputs $\textbf{y}$. Even when the generated solution $\textbf{y}$ is fundamentally incorrect, the model frequently perceives it as correct \citep{deepseekmath-v2, rise, sareen2025putting, zhang2025incentivizing}. The primary goal of self-verification is to endow the policy $\pi_\theta$ with the capability to accurately judge the correctness of its own generated sequence $\textbf{y}$. 

To develop this self-verification capability, existing approaches typically joint-train two tasks during the reinforcement learning phase: solution generation and solution verification \citep{deepseekmath-v2, rise, sareen2025putting, zhang2025incentivizing}. Specifically, given an input question $\textbf{x}$, the policy $\pi_\theta$ first generates a solution $\textbf{y}$. Subsequently, conditioned on $\textbf{x}$, $\textbf{y}$, and a specialized self-verification prompt $I_v$, the policy produces a verification outcome $\textbf{v}$. For verifiable mathematical problems, the reward for solution generation can be straightforwardly derived via rule-based evaluation, checking whether the generated answer $\textbf{y}$ matches the ground-truth $\textbf{y}^{*}$. Likewise, the reward signal for the verification task can also be determined using rule-based logic based on the triplet $(\textbf{v}, \textbf{y}, \textbf{y}^{*})$. For instance, when the verification output $\textbf{v}$ asserts that the generated solution is correct, a true reward is obtained by checking whether $\textbf{y}$ is indeed consistent with $\textbf{y}^{*}$. 

Therefore, the overall optimization objective for these methods integrates the RL objectives of both generation and verification, which can be formulated as:
\begin{align}
\mathcal{J}_{\text{SV}}(\theta) = \mathbb{E}_{(\textbf{x}, \textbf{y}^{*}) \sim \mathcal{D}, \textbf{y} \sim \pi_\theta(\cdot|\textbf{x})} \left[ r_{\text{gen}}(\textbf{y}, \textbf{y}^{*}) +  \mathbb{E}_{\textbf{v} \sim \pi_\theta(\cdot|\textbf{x}, \textbf{y}, I_v)} \left[ r_{\text{ver}}(\textbf{v}, \textbf{y}, \textbf{y}^{*}) \right] \right],
\end{align}
where $\mathcal{D}$ denotes the dataset containing questions $\textbf{x}$ and their corresponding ground-truth answers $\textbf{y}^{*}$; $r_{\text{gen}}$ and $r_{\text{ver}}$ are the rule-based reward functions for the solution generation and verification tasks, respectively.

The fundamental difference between \citep{rise} and \citep{zhang2025incentivizing} lies in the formalization of the verification prompt $I_v$. While \citep{rise} treats the self-verification query as a completely independent, standalone prompt, \citep{zhang2025incentivizing} structures it as a follow-up query within a multi-turn conversation (i.e., a second-turn request). The $I_v$ of these two methods are shown in \cref{fig:rise_ver_prompt} and \cref{fig:self_ver_prompt}, respectively.
\citep{deepseekmath-v2} uses human-annotated data to train Verification and Meta-Verification, which provide feedback for solution generation and solution verification. Subsequently, RL is used to train the policy's ability to generate solutions and its self-verification capability.

\section{Faster Training and Better Self-Verification (FABSVer)}

Existing methods for self-verification treat solution generation and verification as two separate tasks, performing independent forward passes for each during online RL training. Given $N$ questions with $k$ solutions per question and $m$ verification judgments per solution, the total number of forward passes expands from $N \cdot k$ to $N \cdot k \cdot (1+m)$, multiplicatively increasing both training data and inference time. This prohibitive overhead stems from a fundamental design choice: the decoupling of generation and verification.

\paragraph{Motivation and Core Idea.} To address this, we propose \textbf{FABSVer}, which \textbf{fuses solution generation and self-verification into a single forward pass}, eliminating the multiplicative overhead. To further address the convergence bottleneck caused by the fixed reference model, we introduce \textbf{DRMU}, which periodically resets the reference model to the current policy, releasing the KL constraint and enabling sustained reward growth.

Due to space constraints, we defer the algorithm overview to Appendix~\ref{sec:alg-overview}. The overall procedure of FABSVer is summarized in Algorithm~\ref{alg:fabsver}: each generation produces both the solution and the verification result in a single forward pass, optimizes with the combined reward \(r_{\text{combine}}\), and periodically updates the reference model.

\subsection{Task Fusion and Reward Design}

\textbf{Task Fusion.} To reduce training overhead, we propose to output both the solution and the self-verification result in a single generation. Specifically, given an input \(\mathbf{x}\), the model \(\pi_\theta\) generates a solution \(\mathbf{y}\) and a verification result \(\mathbf{v}\) in one forward pass. The prompt template used for training and inference is shown in \cref{fig:prompt}, where the model is instructed to output the reasoning process, answer, and verification result within \(<\texttt{think}>\), \(<\texttt{answer}>\), and \(<\texttt{verify}>\) tags, respectively. Compared with existing methods, the training data size does not incur additional increase, and no second forward pass is required for verification, thereby significantly reducing training time and computational cost.

\textbf{Reward Design.} Under the single-generation framework, we redesign the reward function. We introduce three binary reward signals based on the format, the solution, and the verification result: \(r_{\text{format}}(\mathbf{y}) \in \{0,1\}\) indicates whether the output format is correct, \(r_{\text{gen}}(\mathbf{y}, \mathbf{y}^*) \in \{0,1\}\) indicates whether the generated solution \(\mathbf{y}\) matches the ground truth \(\mathbf{y}^*\), and \(r_{\text{ver}}(\mathbf{y}, \mathbf{v}, \mathbf{y}^*) \in \{0,1\}\) indicates whether the verification result \(\mathbf{v}\) correctly judges the solution. 
The final combined reward \(r_{\text{combine}}(\mathbf{y}, \mathbf{v}, \mathbf{y}^*)\) is then defined as:
\begin{align}
\label{eq:combine}
r_{\text{combine}}(\mathbf{y}, \mathbf{v}, \mathbf{y}^*) = 
\begin{cases} 
1.0, & \text{if } r_{\text{format}}(\mathbf{y}) = 1 \text{ and } r_{\text{gen}}(\mathbf{y}, \mathbf{y}^*) = 1 \text{ and } r_{\text{ver}}(\mathbf{y}, \mathbf{v}, \mathbf{y}^*) = 1 \\[4pt]
0.5, & \text{if } r_{\text{format}}(\mathbf{y}) = 1 \text{ and } r_{\text{gen}}(\mathbf{y}, \mathbf{y}^*) = 1 \text{ and } r_{\text{ver}}(\mathbf{y}, \mathbf{v}, \mathbf{y}^*) = 0 \\[4pt]
0.25, & \text{if } r_{\text{format}}(\mathbf{y}) = 1 \text{ and } r_{\text{gen}}(\mathbf{y}, \mathbf{y}^*) = 0 \text{ and } r_{\text{ver}}(\mathbf{y}, \mathbf{v}, \mathbf{y}^*) = 1 \\[4pt]
0, & \text{otherwise}
\end{cases}
\end{align}

\textbf{Design Principles:} The design of \(r_{\text{combine}}(\mathbf{y}, \mathbf{v}, \mathbf{y}^*)\) follows three principles.

\begin{enumerate}
    \item \textbf{Reward for correct solutions is significantly higher than for incorrect ones:} A correct solution with correct verification receives the highest reward (1.0), a correct solution with incorrect verification receives a medium reward (0.5), while an incorrect solution can receive at most 0.25 (only if the verification is correct). This reflects the consideration that solution generation is more difficult than self-verification.
    
    \item \textbf{Encourage correct self-verification:} Even if the solution is incorrect, a correct verification still receives a small positive reward (0.25) to guide the model to learn accurate judgment of its own output. If both the solution and the verification are incorrect, the reward is zero.
    
    \item \textbf{Both correct solution and correct verification are required for the highest reward:} The model must improve both its generation ability and its self-verification ability simultaneously to achieve optimal performance.
\end{enumerate}

\textbf{Additional Advantages of Task Fusion.} Beyond reducing training overhead, task fusion brings two additional advantages:

\begin{itemize}
    \item \textbf{Flexible output control.} In practice, the \(<\texttt{verify}>\) tag can be treated as an EOS (End of Sequence) token, allowing the model to flexibly decide whether to output the verification result on demand without any architectural modification.
    
    \item \textbf{Joint optimization effect.} In the two-task framework, generation and verification are optimized independently by \(r_{\text{gen}}\) and \(r_{\text{ver}}\) respectively; verification errors do not affect the reward for generation. In contrast, under the fused framework, the combined reward \(r_{\text{combine}}\) depends on both the correctness of the solution and the correctness of the verification. For example, even if the solution is correct, a wrong verification reduces the total reward from 1.0 to 0.5. This coupling mechanism encourages the model to improve its self-verification ability while optimizing generation, as verification errors directly reduce the overall reward, thereby providing a stronger joint learning signal.
\end{itemize}
% \textbf{Additional Advantages of Task Fusion.} Beyond reducing training overhead, task fusion brings two additional advantages: (1) \textbf{Flexible output control:} In practice, the \(<\texttt{verify}>\) tag can be treated as an EOS (End of Sequence) token, allowing flexible decision of whether to output the verification result on demand without modifying the model architecture. (2) \textbf{Joint optimization effect:} In the two-task framework, generation and verification are optimized independently by \(r_{\text{gen}}\) and \(r_{\text{ver}}\) respectively; verification errors do not affect the reward for generation. In contrast, in the fused framework, the combined reward \(r_{\text{combine}}\) depends on both the correctness of the solution and the correctness of the verification. For example, even if the solution is correct, a wrong verification reduces the total reward from 1.0 to 0.5. This coupling mechanism encourages the model to improve its self-verification ability while optimizing generation, as verification errors directly reduce the overall reward, thereby providing a stronger joint learning signal.
\subsection{Dynamic Reference Model Update (DRMU)}

Although task fusion significantly reduces training overhead, during RL training we observe that the reward increases slowly in the later stage, while the KL divergence barely increases. This indicates that the policy update is constrained by the fixed reference model \(\pi_{\text{ref}}\). Intuitively, while the KL divergence provides stability for policy updates, it also acts as a constraint that limits exploration. We further analyze this phenomenon theoretically in \cref{thm:drmu}. The proof of \cref{thm:drmu} can be found in the Appendix ~\ref{proof}.

\begin{restatable}{theorem}{thmOne}
\label{thm:drmu}
Let \(\pi_{\theta}(y|x)\) be the policy to be optimized, and \(\pi_{\text{ref}}(y|x)\) be the reference model. For every reward model \(r(x,y)\) such that \(\mathbb{E}_{\pi_{\text{ref}}(y|x)}[e^{r(x,y)}] < \infty\), then we have:
\[
\mathbb{E}_{\pi_\theta(y|x)}[r(x,y)] \le \mathrm{KL}[\pi_{\theta} \| \pi_{\text{ref}}] + \log \mathbb{E}_{\pi_{\text{ref}}(y|x)}[e^{r(x,y)}]
\]
\end{restatable}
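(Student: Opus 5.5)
This is the Donsker--Varadhan (Gibbs) variational inequality, and I would prove it by building an auxiliary distribution from the reward. Fix \(x\) and set \(Z(x) = \mathbb{E}_{\pi_{\text{ref}}(y|x)}[e^{r(x,y)}]\). This quantity is finite by hypothesis and strictly positive, since \(e^{r}>0\). Then define the exponentially tilted distribution
\[
q(y|x) = \frac{\pi_{\text{ref}}(y|x)\, e^{r(x,y)}}{Z(x)},
\]
which is a valid probability distribution absolutely continuous with respect to \(\pi_{\text{ref}}\). This is where the finiteness assumption is used. The whole proof will consist of expanding \(\mathrm{KL}[\pi_\theta \| q]\) and invoking its nonnegativity (Gibbs' inequality).

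The key steps run in order. First, dispose of the trivial case: if \(\mathrm{KL}[\pi_\theta\|\pi_{\text{ref}}]=+\infty\) the bound holds vacuously. So assume it is finite, which forces \(\pi_\theta \ll \pi_{\text{ref}}\), hence \(\pi_\theta \ll q\), since \(q\) and \(\pi_{\text{ref}}\) have the same support. Second, compute pointwise
\[
\log\frac{\pi_\theta(y|x)}{q(y|x)} = \log\frac{\pi_\theta(y|x)}{\pi_{\text{ref}}(y|x)} - r(x,y) + \log Z(x),
\]
and take the expectation under \(\pi_\theta\) to get
\[
0 \le \mathrm{KL}[\pi_\theta\|q] = \mathrm{KL}[\pi_\theta\|\pi_{\text{ref}}] - \mathbb{E}_{\pi_\theta}[r(x,y)] + \log Z(x).
\]
Rearranging gives exactly the stated inequality. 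I would also note that equality holds iff \(\pi_\theta = q\), which is the familiar optimal KL-regularized policy. This explains the interpretation the paper wants: the reward gain over the reference is capped by the KL budget.

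The main obstacle is measure-theoretic rather than conceptual. The expectation splitting in the second step is only legitimate if \(\mathbb{E}_{\pi_\theta}[r]\) is well defined. In the LLM setting \(y\) ranges over a countable set of sequences and the rewards of interest are bounded (here in \([0,1]\)), so every term is finite and the computation goes through directly. For a general \(r\), I would first apply the argument to the truncation \(r\wedge M\), which is bounded above so all terms are finite. I would then let \(M\to\infty\), using monotone convergence on the positive part of \(r\) for \(\mathbb{E}_{\pi_\theta}[r\wedge M]\) and on \(e^{r\wedge M}\) for the log-partition term.
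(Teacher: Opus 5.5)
Your proposal is correct and matches the paper's proof: the paper also builds the Gibbs distribution \(\pi_\phi \propto \pi_{\text{ref}}\, e^{r}\), expands \(\mathrm{KL}[\pi_\theta \| \pi_\phi]\) into \(\mathrm{KL}[\pi_\theta\|\pi_{\text{ref}}] + \log \mathbb{E}_{\pi_{\text{ref}}}[e^{r}] - \mathbb{E}_{\pi_\theta}[r]\), and concludes from nonnegativity. Your treatment of the infinite-KL case and the truncation \(r \wedge M\) adds integrability details that the paper leaves implicit.
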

In RL algorithms such as PPO or GRPO, \(\pi_{\text{ref}}\) is fixed, and thus the second term on the right-hand side of the inequality is constant. Consequently, if \(\text{KL}[\pi_\theta \| \pi_{\text{ref}}]\) no longer increases, the upper bound of the reward becomes locked, making further improvement difficult.

To break this limitation, we propose {Dynamic Reference Model Update (DRMU)}. Specifically, at certain epochs or training steps, we replace the reference model \(\pi_{\text{ref}}\) with the current policy \(\pi_\theta\). This operation resets \(\text{KL}[\pi_\theta \| \pi_{\text{ref}}]\) to 0, allowing the policy to regain room to diverge from the reference model (i.e., releasing the KL budget), thereby enabling continued reward optimization. Meanwhile, the updated reference model is closer to the current policy, which helps maintain training stability. In this way, DRMU effectively accelerates the overall convergence process.

\section{Experiment}
\label{sec: experiment}

\subsection{Setup}
\label{sec:setup}

\paragraph{Models, Dataset and Training Details.} We follow the same experimental setup as \citep{rise} for both training data and model selection. We use three Qwen2.5 base models \citep{qw2.5} of varying sizes—1.5B, 3B, and 7B—for our experiments, and MATH-Hard (Level 3–5) \citep{math500} serves as the training set throughout. Due to space constraints, the training details are provided in Appendix~\ref{sec:train-details}.

\paragraph{Benchmarks and Evaluation.} We evaluate our models on five mathematical reasoning benchmarks. MATH500 \citep{math500, math500-2}, a subset of the MATH test set, serves as an in-distribution test to assess the efficacy of learning. To evaluate out-of-distribution generalization, we use OlympiadBench\citep{olympiadbench}, AIME2024, AMC2023 and Minerva \citep{Minerva}. Among these, AIME2024 and Olympiad feature competition-level problems, allowing us to rigorously assess model performance on challenging, multi-step reasoning tasks. Following the evaluation setup in \citep{rise}, we generate 8 responses per problem with a sampling temperature of 1 and adopt Pass@1 \citep{pass-k} accuracy as our primary metric. Specifically, reasoning correctness is determined by the final answer in the model's response, and verification correctness is assessed based on both the model's predicted verification result and its consistency with the gold answer. To ensure reproducibility of the results, we use the same inference seed for all models.

\subsection{Comparative Analysis of Self-Verification and Reasoning Performance}
\label{sec:main_result}

Experimental results on self-verification and reasoning performance are reported in Table~\ref{tab:zero_shot_results}, while training efficiency comparisons are provided in Table~\ref{tab:training_time}.

From Base to FABSVer, the model exhibits a pronounced \textbf{dual-improvement trend} in both self-verification and reasoning capabilities. As shown in Table~\ref{tab:zero_shot_results}, taking Qwen2.5-1.5B as an example, the average self-verification score surges from 20.2\% to 79.8\%, while the average reasoning score rises from 1.1\% to 26.6\%, indicating that \textbf{FABSVer achieves positive synergistic gains in both capabilities without sacrificing reasoning performance for self-verification accuracy}. This dual-improvement trend holds consistently across the 3B and 7B model sizes, demonstrating the strong scalability of the proposed method.

In stark contrast, Zero-RL focuses predominantly on enhancing reasoning performance but yields only marginal improvements in self-verification. For the 1.5B model, Zero-RL attains a reasoning average of 24.0\%, a substantial gain over the Base score of 1.1\%; however, its self-verification average reaches merely 26.8\%, far below the 79.8\% achieved by FABSVer. This \textbf{``one-sided improvement''} phenomenon persists across both the 3B and 7B models. These results indicate that \textbf{reasoning-oriented training alone cannot spontaneously engender high-quality self-verification capability}. FABSVer, by contrast, successfully realizes synergistic gains in both capacities---substantially raising the ceiling of self-verification while maintaining a lead in reasoning performance.

FABSVer also comprehensively outperforms the concurrent work RISE across both dimensions, with demonstrably lower training costs. As detailed in Table~\ref{tab:zero_shot_results}, FABSVer surpasses RISE in self-verification by 5.3\%, 2.7\%, and 4.5\% on the 1.5B, 3B, and 7B models, respectively. In terms of reasoning, FABSVer likewise maintains a consistent advantage over RISE across all three model sizes, confirming that the enhanced self-verification capability does not come at the expense of reasoning performance. Crucially, with respect to training efficiency, Table~\ref{tab:training_time} shows that the training time of FABSVer amounts to only 51\% to 71\% of that required by RISE, representing a \textbf{Pareto improvement of stronger performance with lower computational overhead}. This comprehensive superiority substantiates that \textbf{FABSVer, through a more streamlined training paradigm, achieves synergistic advancement in both error-detection and reasoning capabilities}.

\begin{table*}[t]
\centering
\caption{Comparison of self-verification and reasoning performance between FABSVer and existing methods on various math benchmarks. Results for other methods are obtained from \citep{rise}.}
\label{tab:zero_shot_results}
\resizebox{1.0\linewidth}{!}{
\setlength{\tabcolsep}{3pt}
\begin{tabular}{lrrrrrr|rrrrrr}
\toprule
& \multicolumn{6}{c|}{\textit{\textbf{Self-Verification}}} & \multicolumn{6}{c}{\textit{\textbf{Reasoning}}} \\
\cmidrule(lr){2-7} \cmidrule(lr){8-13}
\textbf{Model} & \textbf{\small MATH} & \textbf{\small Olym.} & \textbf{\small AIME} & \textbf{\small AMC} & \textbf{\small Mine.} & \textbf{Avg.} & \textbf{\small MATH} & \textbf{\small Olym.} & \textbf{\small AIME} & \textbf{\small AMC} & \textbf{\small Mine.} & \textbf{Avg.} \\
\midrule
GPT-4o & 83.4 & 54.4 & 33.3 & 67.5 & 50.4 & 57.8 & 79.0 & 42.5 & 13.3 & 55.0 & 50.0 & 48.0 \\ \midrule
\multicolumn{7}{l|}{\em Qwen2.5-1.5B} \\
$\quad \vdash$ Base & 19.4 & 21.1 & 21.9 & 22.7 & 15.9 & 20.2 & 2.0 & 0.6 & 0.0 & 1.9 & 0.8 & 1.1 \\
$\quad \vdash$ Instruct & 48.8 & 29.6 & 22.1 & 36.5 & 36.9 & 34.8 & 37.5 & 11.7 & 0.8 & 19.4 & 8.3 & 15.5 \\
$\quad \vdash$ SFT & 19.0 & 10.9 & 5.8 & 12.3 & 10.5 & 11.7 & 10.1 & 2.0 & 0.0 & 4.1 & 1.8 & 3.6 \\
$\quad \vdash$ Zero-RL & 54.1 & 23.0 & 5.0 & 30.7 & 21.0 & 26.8 & 55.3 & 19.5 & 2.1 & 25.9 & {17.4} & 24.0 \\
$\quad \vdash$ {RISE-1.5B} & {75.9} & {74.9} & {85.0} & {70.6} & \textbf{66.0} & {74.5} & {54.6} & {19.8} & {2.9} & {27.5} & 17.2 & {24.4} \\ 
$\quad \llcorner$ \textbf{FABSVer-1.5B} & \textbf{81.0} & \textbf{81.9} & \textbf{91.7} & \textbf{83.1} & {61.4} & \textbf{79.8} & \textbf{56.8} & \textbf{22.4} & \textbf{4.6} & \textbf{30.6} & \textbf{18.7} & \textbf{26.6} \\ 
\midrule
\multicolumn{7}{l|}{\em Qwen2.5-3B} \\
$\quad \vdash$ Base & 39.5 & 21.2 & 13.6 & 22.5 & 29.9 & 25.3 & 32.7 & 10.7 & 1.3 & 15.3 & 10.3 & 14.1 \\
$\quad \vdash$ Instruct & 65.6 & 35.0 & 21.0 & 45.5 & 37.6 & 40.9 & 61.0 & 24.6 & 3.8 & 34.1 & 25.6 & 29.8 \\
$\quad \vdash$ SFT & 21.5 & 13.2 & 2.1 & 10.9 & 17.9 & 13.1 & 14.4 & 2.8 & 0.4 & 5.3 & 2.9 & 5.2 \\
$\quad \vdash$ Zero-RL & 64.9 & 31.2 & 13.0 & 39.7 & 30.3 & 35.8 & 64.2 & {26.6} & 6.7 & 37.5 & {27.4} & 32.5 \\
$\quad \vdash$ {RISE-3B} & {81.0} & {73.6} & \textbf{86.3} & {74.4} & \textbf{56.1} & {74.3} & {64.3} & {26.6} & \textbf{7.9} & \textbf{42.5} & 26.2 & {33.5} \\ 

$\quad \llcorner$ \textbf{FABSVer-3B} & \textbf{85.2} & \textbf{77.8} & {85.4} & \textbf{81.3} & {55.1} & \textbf{77.0} & \textbf{67.5} & \textbf{30.0} & {6.3} & {37.5} & \textbf{28.0} & \textbf{33.9} \\ 

\midrule
\multicolumn{7}{l|}{\em Qwen2.5-7B} \\
$\quad \vdash$ Base & 58.4 & 48.4 & 45.9 & 51.5 & 48.4 & 50.5 & 38.3 & 13.2 & 2.1 & 21.9 & 11.9 & 17.5 \\
$\quad \vdash$ Instruct & 77.2 & 45.2 & 26.3 & 57.0 & 40.2 & 49.2 & 73.8 & 35.8 & 10.0 & 50.6 & \textbf{35.9} & 41.2 \\
$\quad \vdash$ SFT & 40.5 & 36.1 & 36.6 & 47.4 & 39.2 & 40.0 & 28.7 & 7.2 & 0.8 & 13.8 & 6.2 & 11.3 \\
$\quad \vdash$ Zero-RL & 75.9 & 41.6 & 21.7 & 56.5 & 37.3 & 46.6 & 74.5 & {36.7} & 12.1 & 51.3 & 34.2 & 41.7 \\
$\quad \vdash$ {RISE-7B} & {83.8} & {65.9} & {75.0} & {72.5} & \textbf{48.6} & {69.2} & {74.8} & {36.7} & {12.5} & \textbf{55.9} & 34.6 & {42.9} \\ 

$\quad \llcorner$ \textbf{FABSVer-7B} & \textbf{88.3} & \textbf{70.1} & \textbf{85.8} & \textbf{76.9} & {47.5} & \textbf{73.7} & \textbf{77.4} & 

\textbf{41.0} & \textbf{13.8} & {53.1} & {35.4} & \textbf{44.1} \\ 

\bottomrule

\end{tabular}
}
\end{table*}

\begin{table}[htbp]
\centering
\caption{Training time (GPU hours) of Fabvers and RISE on H20 GPUs.}
\label{tab:training_time}
\begin{tabular}{lcccccc}
\toprule
& \multicolumn{2}{c}{Qwen2.5-1.5B Base} & \multicolumn{2}{c}{Qwen2.5-3B Base} & \multicolumn{2}{c}{Qwen2.5-7B Base} \\
\cmidrule(lr){2-3} \cmidrule(lr){4-5} \cmidrule(lr){6-7}
& Fabvers & RISE & Fabvers & RISE & Fabvers & RISE \\
\midrule
GPU hours & 98 & 192 & 136 & 192 & 244 & 384 \\
\bottomrule
\end{tabular}
\end{table}

\subsection{Ablation Study}

To evaluate the effectiveness of the proposed Dynamic Reference Model Update (DRMU), we conduct ablation experiments on FABSVer by removing DRMU while keeping all other components unchanged. Both quantitative results on math benchmarks and training dynamics are analyzed.

\paragraph{Quantitative Results.}
Table~\ref{tab:ablation} presents the ablation results across 1.5B, 3B, and 7B model sizes. Overall, DRMU brings consistent improvements in both self-verification and reasoning capabilities. In terms of self-verification, FABSVer with DRMU outperforms its counterpart without DRMU by 1.7\%, 6.2\%, and 2.6\% on average for the 1.5B, 3B, and 7B models, respectively. The gains are particularly pronounced on challenging competition-level benchmarks such as AIME and OlympiadBench, demonstrating that DRMU effectively enhances the model's error-detection ability on difficult problems. In terms of reasoning, DRMU also yields consistent improvements across all three model sizes,  confirming that the enhanced self-verification capability does not come at the cost of reasoning performance.

\paragraph{Training Dynamics.}
Figure~\ref{fig:drmu_effect} illustrates the training curves of combined reward and KL divergence for FABSVer with and without DRMU. For the variant with DRMU, the reference model is updated at steps 273, 546, and 819, which correspond to epochs 4, 8, and 12, respectively. After each update, the KL divergence drops sharply to approximately zero, since the current policy becomes the new reference. This \textbf{KL reset} releases the constraint imposed by the fixed reference model, allowing the policy to continue exploring and improving its reward. As a result, FABSVer with DRMU achieves higher final rewards across all three model sizes compared to the variant without DRMU.

In contrast, without DRMU, the KL divergence ceases to exhibit significant growth in the later stages of training, and the reward improvement decelerates accordingly. This observation aligns with \Cref{thm:drmu}. Consequently, when $\mathrm{KL}[\pi_\theta \| \pi_{\text{ref}}]$ stops increasing, the ceiling on the achievable reward is effectively capped, limiting the potential for further improvement. By periodically resetting the KL to zero, DRMU raises this ceiling and thus enables continued reward growth.

\begin{table*}[t]
\centering
\caption{Ablation results of FABSVer with and without DRMU.}
\label{tab:ablation}
\resizebox{1.0\linewidth}{!}{
\setlength{\tabcolsep}{3pt}
\begin{tabular}{lrrrrrr|rrrrrr}
\toprule
& \multicolumn{6}{c|}{\textit{\textbf{Self-Verification}}} & \multicolumn{6}{c}{\textit{\textbf{Reasoning}}} \\
\cmidrule(lr){2-7} \cmidrule(lr){8-13}
\textbf{Model} & \textbf{\small MATH} & \textbf{\small Olym.} & \textbf{\small AIME} & \textbf{\small AMC} & \textbf{\small Mine.} & \textbf{Avg.} & \textbf{\small MATH} & \textbf{\small Olym.} & \textbf{\small AIME} & \textbf{\small AMC} & \textbf{\small Mine.} & \textbf{Avg.} \\
\midrule

\multicolumn{7}{l|}{\em FABSVer-1.5B} \\

$\quad \vdash$ \;w/\phantom{o} \;  DRMU  & \textbf{81.0} & \textbf{81.9} & \textbf{91.7} & \textbf{83.1} & \textbf{61.4} & \textbf{79.8} & \textbf{56.8} & \textbf{22.4} & \textbf{4.6} & \textbf{30.6} & \textbf{18.7} & \textbf{26.6} \\ 

$\quad \llcorner$ \; w/o \; DRMU & 80.1 & {81.4} & {89.8} & {79.1} & {60.3} & {78.1} & {56.4} & {21.6} & {3.8} & {27.8} & {18.2} & {25.6} \\ 
\midrule

\multicolumn{7}{l|}{\em  FABSVer-3B} \\
$\quad \vdash$ \;w/\phantom{o} \;  DRMU  & \textbf{85.2} & \textbf{77.8} & \textbf{85.4} & \textbf{81.3} & \textbf{55.1} & \textbf{77.0} & \textbf{67.5} & \textbf{30.0} & \textbf{6.3} & {37.5} & \textbf{28.0} & \textbf{33.9} \\ 

$\quad \llcorner$ \; w/o \; DRMU & {81.1} & {70.3} & {82.5} & {70.6} & {49.3} & {70.8} & {64.8} & {28.8} & {5.3} & \textbf{40.0} & {27.8} & {33.3} \\ 
\midrule

\multicolumn{7}{l|}{\em FABSVer-7B} \\

$\quad \vdash$ \;w/\phantom{o} \;  DRMU  &  \textbf{88.3} & \textbf{70.1} & \textbf{85.8} & \textbf{76.9} & \textbf{47.5} & \textbf{73.7} & \textbf{77.4} & \textbf{41.0} & \textbf{13.8} & \textbf{53.1} & \textbf{35.4} & \textbf{44.1} \\ 

$\quad \llcorner$ \; w/o \; DRMU & {87.3} & {69.4} & {80.4} & {74.3} & {43.9} & {71.1} & {76.1} & {38.7} & \textbf{13.8} & {49.7} & {34.7} & {42.6} \\ 

\bottomrule

\end{tabular}
}
\end{table*}

\begin{figure}
    \centering
    \includegraphics[width=1.0\textwidth]{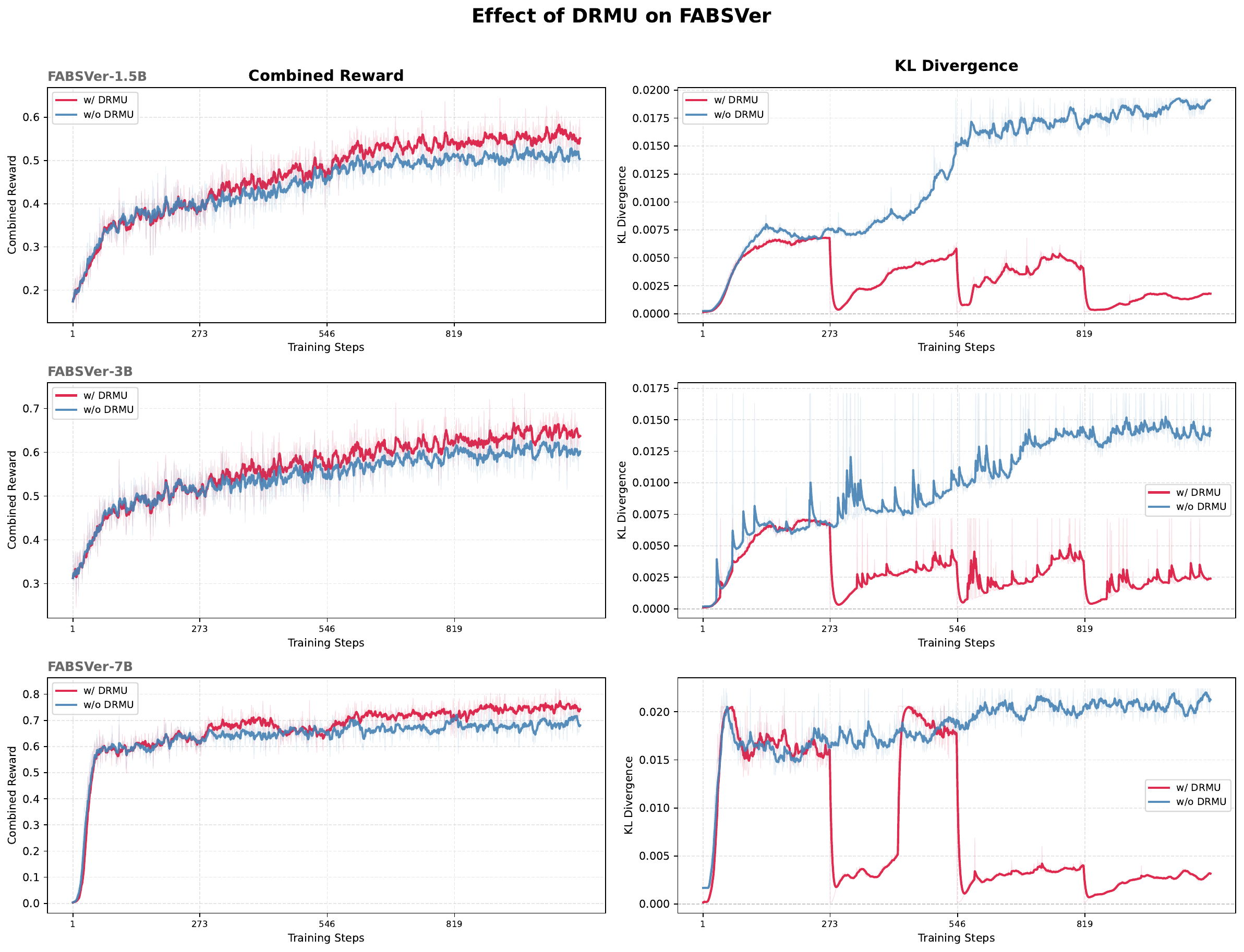}
    \caption{The effect of DRMU on FABSVer. For FABSVer with DRMU, steps 273, 546, and 819 correspond to the updates of the reference model using the current policy. After each update, the KL divergence drops sharply to approximately zero, since the current policy becomes the new reference.}
    \label{fig:drmu_effect}
\end{figure}

\subsection{Analysis of Answer and Verify Rewards during Training}
Figure~\ref{fig:reward_components} presents the training curves of answer reward and verify reward for FABSVer across three model sizes. Several notable patterns emerge from these dynamics.

In the early stages of training, the answer reward and verify reward are nearly identical across all model sizes. We observe that the model exhibits a strong tendency to always predict its own outputs as correct at this stage. This can be attributed to a \textbf{free-riding effect}: as the model's answer generation capability steadily improves, it produces correct answers with increasing frequency. Consequently, the verify task can achieve a steadily increasing reward by simply predicting ``Correct'' for every sample, without developing any genuine discriminative ability. As a result, the verify reward passively tracks the answer reward, and the two remain tightly coupled.

As training progresses into the middle and later stages, the growth of answer reward decelerates. At this point, the free-riding strategy becomes insufficient---merely judging most outputs as correct no longer yields additional reward gains. To further improve the overall reward, the model is forced to develop the ability to correctly identify its own errors. This shift from a naive ``almost-always-correct'' strategy to a more sophisticated discriminative policy drives a second wave of verify reward improvement, allowing it to outpace answer reward. This phenomenon represents a \textbf{forced learning effect}: the model must learn to discriminate in order to unlock further reward when answer quality plateaus.

Notably, the gap between verify reward and answer reward shrinks as model size increases. For smaller models (e.g., 1.5B), verify reward notably surpasses answer reward, reflecting their greater reliance on explicit verification learning. For larger models (e.g., 7B), the gap between the two rewards is considerably smaller, suggesting that larger models possess stronger intrinsic self-verification capabilities---likely acquired during pretraining---and thus require less explicit discriminative training to achieve relatively balanced performance. This observation is consistent with the results in Table~\ref{tab:zero_shot_results}, where larger models consistently achieve higher absolute self-verification scores.

\begin{figure}[t]
    \centering
    \includegraphics[width=1.0\textwidth]{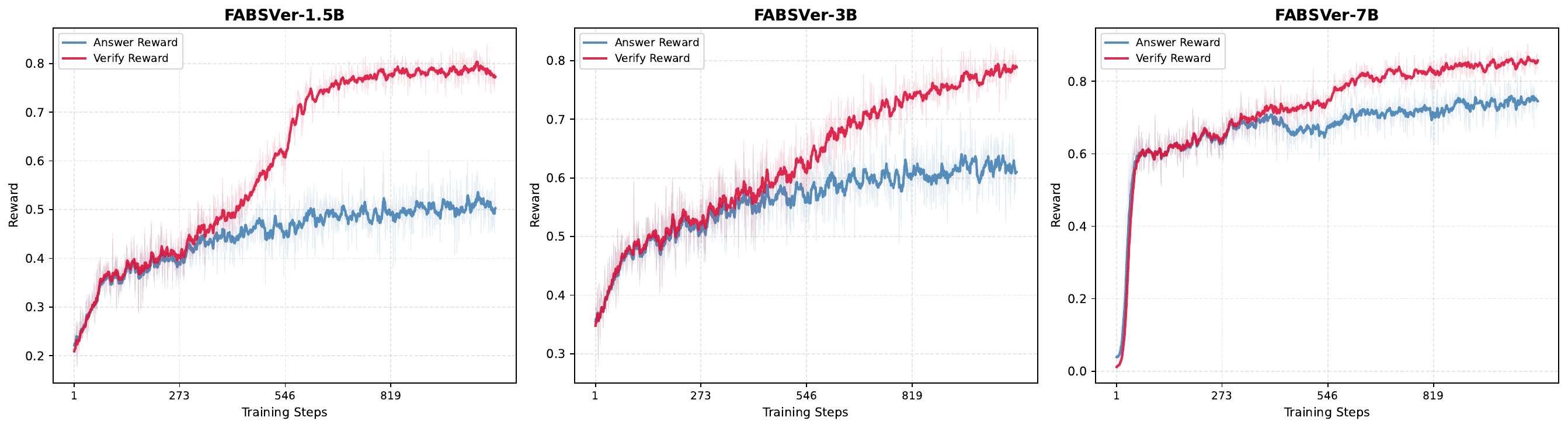}
    \caption{Training curves of answer reward and verify reward for FABSVer across different model sizes.}
    \label{fig:reward_components}
\end{figure}

\section{Conclusion}
In this paper, we propose FABSVer, a streamlined training paradigm that fuses solution generation and self-verification into a single forward pass, substantially reducing training overhead while achieving synergistic gains in both capabilities. Through a combined reward design and the Dynamic Reference Model Update (DRMU) mechanism, FABSVer overcomes the convergence bottleneck imposed by a fixed reference model, enabling sustained reward growth. Our experimental results demonstrate that FABSVer consistently outperforms existing methods across multiple model sizes and math benchmarks, achieving stronger performance with lower computational cost. Further analysis of training dynamics reveals that models undergo a two-phase transition in acquiring self-verification ability—from an early free-riding stage to a later forced learning stage.

\bibliographystyle{plainnat}
\bibliography{bib}

\newpage

\appendix

\section{Limitation}
\label{sec:limit}

The trained LLM, although capable of serving as a rule verifier to guide RL training on unlabeled data for self-improvement, is not explored in this direction within this paper and remains future work. Meanwhile, the training data in this work are entirely derived from mathematical reasoning tasks; despite the model exhibiting a certain degree of generalization ability, its effectiveness in broader domains beyond those with verifiable rewards still requires further validation. In addition, we primarily adopt final inference accuracy as the evaluation metric. However, a single metric makes it difficult to fully capture the criticality of the model's emergent verification behavior, indicating that the evaluation system still needs further refinement.

\section{Training Details.}
\label{sec:train-details}
 We use \texttt{trl} \citep{trl} for our experiments. During RL training, the training batch size is set to 1024, with a constant learning rate of 5e-7 and a uniform KL penalty coefficient of 1e-2. The sampling temperature is set to 1.0, and we roll out 8 responses per problem. The maximum output length is set to 3000 tokens. We train for 16 epochs, and the reference model is updated every 4 epochs. As the Qwen2.5-1.5B Base and Qwen2.5-3B Base models have difficulty producing responses in the required format, we first sample 5K correct responses from the Qwen2.5-7B Base model's training run and use them to fine-tune the 1.5B and 3B models for 20 steps, before proceeding with RL training on these two models.

\section{Algorithm Overview}
\label{sec:alg-overview}
\begin{algorithm}[]

\caption{FABSVer (GRPO)}
\label{alg:fabsver}

\textbf{Input} Language model $\pi_{\theta_{\text{init}}}$; dataset $\mathcal{D}$; number of solutions per question $K$; generation batch size $\mathcal{B}_g$; total training iterations $N$; reference model update interval $U$.

\begin{algorithmic}[1]
  \State \textbf{Initialize:} policy $\pi_{\theta}\leftarrow\pi_{\theta_{\text{init}}}$, reference model $\pi_{\text{ref}}\leftarrow\pi_{\theta_{\text{init}}}$

  \For{$\text{iteration}=1$ \textbf{to} $N$}
    \State Sample $\mathcal{B}_g$ questions $\{(\mathbf{x}_i, \mathbf{y}_i^*)\}_{i=1}^{\mathcal{B}_g} \sim \mathcal{D}$
    
    \State Generate solutions and verification results in a single forward pass: \Comment{Task Fusion}
    \Statex \hspace{\algorithmicindent} $\displaystyle 
           \mathcal{B}_{\text{train}}\gets
           \bigl\{(\mathbf{x}_i, \mathbf{y}_i^{(k)}, \mathbf{v}_i^{(k)}, r_{\text{combine}}(\mathbf{y}_i^{(k)}, \mathbf{v}_i^{(k)}, \mathbf{y}_i^*))
           \,\bigm|\,
           (\mathbf{y}_i^{(k)}, \mathbf{v}_i^{(k)}) \sim \pi_\theta(\cdot|\mathbf{x}_i),
           \,i \le \mathcal{B}_g,\, k \le K \bigr\}$
    
    \State Compute rewards using Eq.~\ref{eq:combine}
    
    \State Update policy $\pi_{\theta}$ using GRPO objective (Eq.~\ref{eq:GRPO}) with reference model $\pi_{\text{ref}}$
    
    \If{$\text{iteration} \bmod U = 0$} \Comment{Dynamic Reference Model Update}
      \State $\pi_{\text{ref}} \leftarrow \pi_{\theta}$
    \EndIf
  \EndFor
\end{algorithmic}
\textbf{Output} Optimized policy $\pi_\theta$

\end{algorithm}

\newpage

\section{Prompt for training and inference}

\begin{figure}[h]
\centering
\begin{promptspecial}
<|im\_start|>system \\
A conversation between User and Assistant. The user asks a question, and the Assistant solves it. The assistant first thinks about the reasoning process in the mind and then provides the user with the answer along with the verification result of the answer. The reasoning process, answer and verification result are enclosed within <think> </think>, <answer> </answer> and <verify> </verify> tags, respectively, i.e., <think>\\ please provide the detailed reasoning process here \\</think>\\\\<answer>\\ please provide the complete answering process here, then output the final answer in \textbackslash \textbackslash boxed\{Your answer\} \\</answer>\\\\<verify>\\ please provide the detailed verification process here, then output \textbackslash \textbackslash boxed\{Correct\} if the answer is correct, otherwise output \textbackslash \textbackslash boxed\{Incorrect\} \\</verify>
<|im\_end|> \\
<|im\_start|>user \\
\{\textbf{Input}\}<|im\_end|> \\
<|im\_start|>assistant
\end{promptspecial}
\caption{Prompt template used in the training and inference.}
\label{fig:prompt}
\end{figure}

% \begin{figure}[h]
% \centering
% \begin{promptspecial}
% <|im\_start|>system \\
% You are a helpful assistant.<|im\_end|> \\
% <|im\_start|>user \\
% \{\textbf{Input}\} Please reason step by step, and put your final answer within \textbackslash \textbackslash boxed\{\}.<|im\_end|> \\
% <|im\_start|>assistant
% \end{promptspecial}
% \caption{Prompt template used for Qwen base model evaluation.}
% \label{fig:base_eval_prompt}
% \end{figure}

\newpage

\section{Verification Prompt in Related Approaches}

\begin{figure}[h]
\centering
\begin{prompttext}
Below you are presented with a question and a tentative response. Your task is to evaluate and assign a rating to the response based on the following clear criteria:

\vspace{10pt} 
Rating Criteria:

\vspace{10pt}
1. Missing final answer enclosed in \textbackslash \textbackslash boxed\{\} at the end: assign \textbackslash \textbackslash boxed\{-1\}.

2. Correct response with the final answer enclosed in \textbackslash \textbackslash boxed\{\} at the end: assign \textbackslash \textbackslash boxed\{1\}.

3. Incorrect response with the final answer enclosed in \textbackslash \textbackslash boxed{} at the end: assign \textbackslash \textbackslash boxed\{-0.5\}.

\vspace{10pt} 
\#\#\# Question Begin \#\#\#

\{\textbf{Question}\}

\#\#\# Question End \#\#\# 

\vspace{10pt} 
\#\#\# Response Begin \#\#\#

\{\textbf{Response}\}

\#\#\# Response End \#\#\#

\vspace{10pt} 
Briefly summarize your analysis, then clearly state your final rating value enclosed in \textbackslash \textbackslash boxed\{\} at the end.
\end{prompttext}
\caption{Verification prompt used in \citep{rise}.}
\label{fig:rise_ver_prompt}
\end{figure}

\begin{figure}[h]
\centering
\begin{prompttext}

\vspace{10pt} 

User: [Math problem]

\vspace{10pt} 

Assistant: [Solution]

\vspace{10pt} 

User : Please verify the solution step by step. At the end of the solution verification, when you give your final grade, write it in the form " Is the answer correct ( Yes / No ) ? X" ,
where X is either Yes or No.

\vspace{10pt} 

Assistant :

\end{prompttext}
\caption{Verification prompt used in \citep{zhang2025incentivizing}.}
\label{fig:self_ver_prompt}
\end{figure}

 % output

\newpage

\section{Proof of \cref{thm:drmu}}
\label{proof}
\thmOne*
\begin{proof}

We can construct a Gibbs distribution \(\pi_{\phi}(y|x)\) based on \(\pi_{\text{ref}}(y|x)\) and \(r(x,y)\) as follows:
\begin{align}
    \pi_{\phi}(y|x) = \frac{e^{r(x,y)}}{\mathbb{E}_{\pi_{\text{ref}}(y|x)}[e^{r(x,y)}] }\pi_{\text{ref}}(y|x).
\end{align}

Furthermore, we can obtain:
\begin{align}
\label{eq:parb_ratio}
    \frac{\pi_{\text{ref}}(y|x)}{\pi_{\phi}(y|x) }  = \frac{\mathbb{E}_{\pi_{\text{ref}}(y|x)}[e^{r(x,y)}] }{e^{r(x,y)}}.
\end{align}

Now, let us compute the KL divergence between \(\pi_{\theta}(y|x)\) and \(\pi_{\phi}(y|x)\):

\begin{align}
\text{KL}[\pi_{\theta} \| \pi_{\phi}]  &= \mathbb{E}_{\pi_\theta(y|x)}\left[\log \frac{\pi_\theta(y|x)}{\pi_\phi(y|x)}\right] = \mathbb{E}_{\pi_\theta(y|x)}\left[\log \left( \frac{\pi_\theta(y|x)}{\pi_\text{ref}(y|x)} \cdot \frac{\pi_\text{ref}(y|x)}{\pi_\phi(y|x)} \right) \right]  \\
&= \text{KL}[\pi_{\theta} \| \pi_{\text{ref}}]  + \mathbb{E}_{\pi_\theta(y|x)}\left[\log \frac{\pi_\text{ref}(y|x)}{\pi_\phi(y|x)}\right].
\end{align}

Substituting Eq.~\ref{eq:parb_ratio} into the above expression yields:
\begin{align}
\text{KL}[\pi_{\theta} \| \pi_{\phi}] 
&= \text{KL}[\pi_{\theta} \| \pi_{\text{ref}}]  + \mathbb{E}_{\pi_\theta(y|x)}\left[\log \frac{\mathbb{E}_{\pi_{\text{ref}}(y|x)}[e^{r(x,y)}] }{e^{r(x,y)}}\right]  \\
&= \text{KL}[\pi_{\theta} \| \pi_{\text{ref}}]  + \log \mathbb{E}_{\pi_{\text{ref}}(y|x)}[e^{r(x,y)}]  - \mathbb{E}_{\pi_\theta(y|x)}[r(x,y)].
\end{align}

By the non-negativity of KL divergence, we have \(\text{KL}[\pi_{\theta} \| \pi_{\phi}] \ge 0\). Therefore, we can finally prove:
\begin{align}
    \mathbb{E}_{\pi_\theta(y|x)}[r(x,y)] \le \text{KL}[\pi_{\theta} \| \pi_{\text{ref}}] + \log \mathbb{E}_{\pi_{\text{ref}}(y|x)}[e^{r(x,y)}].
\end{align}

\end{proof}

\newpage
\section{Examples of Outputs}

% 1.5B
\begin{figure}[h]
\centering
\begin{outputcase}

\vspace{10pt} 

\textbf{Input}: Let \[f(n) = \left\{ \begin{array}{cl}
n^3+2n-1 &\text{ if }n>1, \\
n-1 &\text{ if }n \le 1.
\end{array}
\right.\] Find $f(0)+f(1)+f(2)$.

\vspace{10pt} 

\textbf{Output}: 

<think>

Let's evaluate \( f(0) \), \( f(1) \), and \( f(2) \) and sum them up. 

1. For \( f(0) \):
\[ f(0) = \left\{
\begin{array}{cl}
0^3 + 2 \cdot 0 - 1 & \text{if } 0 > 1, \\
0 - 1 & \text{if } 0 \le 1.
\end{array}
\right.
\]
Since \( 0 \le 1 \), we use the second expression:
\[ f(0) = 0 - 1 = -1. \]

2. For \( f(1) \):
\[ f(1) = \left\{
\begin{array}{cl}
1^3 + 2 \cdot 1 - 1 & \text{if } 1 > 1, \\
1 - 1 & \text{if } 1 \le 1.
\end{array}
\right.
\]
Since \( 1 \le 1 \), we use the second expression:
\[ f(1) = 1 - 1 = 0. \]

3. For \( f(2) \):
\[ f(2) = 2^3 + 2 \cdot 2 - 1 = 8 + 4 - 1 = 11. \]

Now we sum these values:
\[ f(0) + f(1) + f(2) = -1 + 0 + 11 = 10. \]

</think>

<answer>

\textbackslash boxed\{10\}

</answer>

<verify>

To verify the calculation, I will go back to the definitions and the rules I stated.

1. \( f(0) = -1 \)
2. \( f(1) = 0 \)
3. \( f(2) = 11 \)

Summing them up:
\[ -1 + 0 + 11 = 10 \]

Verifying the correctness, I get \textbackslash boxed\{Correct\}.

</verify>

\vspace{10pt} 

\end{outputcase}
\caption{Output case of FABSVer-1.5B.}
\label{fig:ver_prompt}
\end{figure}

% 3B
\begin{figure}[h]
\centering
\begin{outputcase}

\vspace{10pt} 

\textbf{Input}: How many ways are there to put 4 distinguishable balls 
into 2 distinguishable boxes?

\vspace{10pt} 

\textbf{Output}: 

\textless think\textgreater

To solve this problem, we can use the concept of permutations. 
Since each ball must go into one of two distinguishable boxes, 
and the balls are distinguishable, each ball has two choices 
(either of the two boxes).
\vspace{10pt} 

For 4 distinguishable balls, each ball has 2 choices:
(For the first ball) 2 choices, (For the second ball) 2 choices, 
(For the third ball) 2 choices, and (For the fourth ball) 2 choices.

\vspace{10pt} 
The total number of ways to distribute the balls is 
\(2 \times 2 \times 2 \times 2 = 2^4\).

Calculating this, 

\[2^4 = 16\]

Therefore, there are 16 different ways to put 4 distinguishable 
balls into 2 distinguishable boxes.

\textless /think\textgreater

\textless answer\textgreater

\textbackslash boxed\{16\}

\textless /answer\textgreater

\textless verify\textgreater

The problem requires counting the number of ways to distribute 
4 distinguishable balls into 2 distinguishable boxes. 

\vspace{10pt} 

We established that each of the 4 balls has 2 choices, 
leading to \(2^4 = 16\) possible distributions.

\vspace{10pt} 

Each distribution is indeed one arrangement of the 4 balls in 
two boxes, which confirms our answer. Therefore, the answer is \textbackslash boxed\{Correct\}.

\textless /verify\textgreater

\vspace{10pt} 

\end{outputcase}
\caption{Output case of FABSVer-3B.}
\label{fig:ver_prompt}
\end{figure}

%
% 7B
\begin{figure}[h]
\centering
\begin{outputcase}

\vspace{10pt}

\textbf{Input}: What is the value of \(y\) in the arithmetic sequence \(y + 6\), \(12\), \(y\)?

\textbf{Output}:

\textless think\textgreater

To determine the value of \(y\) in the arithmetic sequence, we need to use the property of arithmetic sequences that the difference between consecutive terms is constant. Let's denote the common difference of the sequence as \(d\).

\vspace{10pt}

From the sequence \(y + 6, 12, y\):

\vspace{10pt}

1. The difference between the first and second term should be equal to the difference between the third and second term:
\[ 12 - (y + 6) = y - 12 \]

\vspace{10pt}

2. Simplify both sides of the equation:
\[ 12 - y - 6 = y - 12 \]
\[ 6 - y = y - 12 \]

\vspace{10pt}

3. Solve for \(y\):
\[ 6 + 12 = y + y \]
\[ 18 = 2y \]
\[ y = 9 \]

\textless /think\textgreater

\textless answer\textgreater

The value of \(y\) is \textbackslash boxed\{9\}.

\textless /answer\textgreater

\textless verify\textgreater

\vspace{10pt}

Let's verify the solution by checking if it maintains the same common difference \(d\):

\vspace{10pt}

- The difference between the first and second term with \(y = 9\):
\[ 12 - (9 + 6) = 12 - 15 = -3 \]

\vspace{10pt}

- The difference between the second and third term with \(y = 9\):
\[ 9 - 12 = -3 \]

\vspace{10pt}

Since both differences are equal, the solution is \textbackslash boxed\{Correct\}

\textless /verify\textgreater

\vspace{10pt}

\end{outputcase}
\caption{Output case of FABSVer-7B.}
\label{fig:ver_prompt}
\end{figure}

\end{document}